\documentclass{amsart}
\newcommand{\R}{\ensuremath{\mathbb{R}}}
\usepackage{graphicx}
\usepackage[capitalise,noabbrev]{cleveref}
\usepackage{tabularx}
\usepackage{tikz}
\usepackage{subcaption}
\usepackage{bbm}
\usepackage[italicdiff]{physics}

\newtheorem{lemma}{Lemma}
\DeclareMathOperator{\Var}{Var}

\title{A new model for natural groupings in high-dimensional data}

\author{Mireille Boutin} \address{Department of Mathematics and
  Computer Science, Eindhoven Institute of Technology, 5600 MB
  Eindhoven, Netherlands, and Department of Mathematics s and Elmore Family School of Electrical and Computer Engineering, Purdue
  University, 150 N.~University St., West Lafayette, IN, USA 47907}
\email{m.boutin@tue.nl}%
\author{Evzenie Coupkova} \address{Department of Mathematics, Purdue
  University, 150 N.~University St., West Lafayette, IN, USA 47907} \email{ecoupkov@purdue.edu}

\keywords{Clustering, Projections, high-dimension, Sparsity, Probability Model, Multivariate Bernoulli random variable}

\begin{document}
	\maketitle

\begin{abstract}
Clustering aims to divide a set of points into groups. The current paradigm assumes that the grouping is well-defined (unique) given the probability model from which the data is drawn. 
Yet, recent experiments have uncovered several high-dimensional datasets that form different binary groupings  after projecting the data to randomly chosen one-dimensional subspaces. This paper describes a probability model for the data that could explain this phenomenon. 
It is a simple model to serve as a proof of concept for understanding the geometry of high-dimensional data.
We start by building a rescaled multivariate Bernouilli model (stretched hypercube) so to create several overlapping grouping structures in the data. The size of each scaling parameter is related to the likelihood of uncovering the corresponding grouping by random 1D projection.
 Clusters in the original space are then created by adding noise to this cluster-free model.
 In high dimension, these clusters would hardly be observable given a sample set from the distribution because of the curse of dimensionality, but the binary groupings are clear. Our construction makes it clear that one needs to make a distinction between ``groupings" and ``clusters" in the original space.
It also highlights the need to interpret any clustering found in projected data as merely one among potentially many other groupings in a dataset.

\end{abstract}

\section{Introduction} \label{sec:intro}
Unsupervised machine learning seeks to use random samples to build a model that encapsulates 
the structure of the probability model underlying a dataset. A common assumption is that the data is distributed following a mixture of uni-modal probability densities such as Gaussians. In low-dimension, the sample points of such a model tend to bundle together in space when they are drawn from the same uni-modal probability density. Thus, in low-dimension, a natural grouping of the points is often found by looking for regions of space in which points accumulate, in other words ``clusters".

But high-dimensional datasets are typically very sparse and thus free of clusters.  
So one typically groups the points by looking for clusters within low-dimensional projections of the dataset. It is well understood that projecting a dataset may destroy its structure. For example, a mixture of two Gaussians can become a unimodal distribution after projection. 
However, projecting can also create structures. For example, a probability density function consisting of a large number of Dirac deltas  sparsely distributed along two straight parallel lines would become a mixture of two deltas after projection on the plane perpendicular to the lines. Thus, sample
points that do not form an agglomeration in the high-dimensional dataset may become agglomerated after projection to a lower-dimension. Such agglomerations are not ``ghost" ones resulting from the sampling process; they result from structures that are present in the projected density function, and thus correspond to valid groupings among the sample points.
One way to explain this is by viewing  
the projection direction as a  dimension containing information, and the remaining dimensions as containing noise. Since there are many ways to project a dataset, that is to say many ways to choose what is noise versus what is information, there can be many ways to create such groupings, and the different groupings so obtained need not be consistent. This means that points drawn from a high-dimensional density function may be grouped in different ways, and the different ways to group them, though inconsistent, may all be correct. In that scenario, a grouping is different from a clustering in the original space, in that a cluster is an accumulation of points in space, whereas a grouping is merely a meaningful division of the points into subsets. While the points may cluster (i.e., accumulate) after projection, thus defining a grouping, the points in the original space are not necessarily clustered in the sense of being clumped together.

This phenomenon was  initially observed experimentally in \cite{han2015hidden} for image datasets. More specifically, the feature points used to represent images were found to often form bimodal histograms after a projection on a random line. Later work appeared to confirm the validity of these bimodal structures. In particular, a hierarchical clustering method consisting of a tree of random projections followed by thresholding turned out to be surprisingly accurate at clustering some high-dimensional benchmark datasets in \cite{YellamrajuTarun2018CaCo}. We are currently investigating the advantage of clustering by random projection for small datasets \cite{bradford2020clustering}. We are also studying the use of different grouping structures to analyze dependencies between different variables in a small dataset \cite{yellamraju2018pattern} and applying these ideas to analyzing educational data \cite{williams2018characterizing,YellamrajuMaganaBoutin2019,magana2019role,CruzCastroMaganaDouglasBoutin2021}.

For this paper, we asked ourselves: 
what kind of structure in the probability density model of the high-dimensional data would cause 
this to occur? In other words, what kind of density model would a high-dimensional dataset need to be drawn from in order for the points to have a high probability of forming clusters after a projection on a random line.

Again, this questioning is not about
 ``ghost" structures, that is to say observations due to the fact that we are observing only a sample set, as opposed to the entire projected distribution. 
Indeed in \cite{bickel2018projection}, Bickel, Kur, and Nadler highlight the fact that, when projected to a lower-dimensional space, data points in high-dimension may appear to form structures that do not actually exist. In particular, they show that with a sample of points drawn independently and identically from a high-dimensional Gaussian distribution, such ghost structures can be created to mimic any distribution by choosing a projection specifically tailored to that sample.
However, results by Diaconis and Freedman \cite{DiaconisPersi1984AoGP} imply that such projections must be carefully chosen, as most projections of a data set (satisfying some mild conditions) appear to be Gaussian. The fact that the binary clustering structures appear after a {\em random} choice of projection direction would therefore contradict the ghost structure hypothesis.

Let us summarize. If the experimental observations of \cite{han2015hidden,YellamrajuTarun2018CaCo} are correct, then there must exist a probability density function from which one can draw random samples and expect (i.e., not on accident) the points to be such that:
\begin{enumerate}
	\item a projection of the sample points on a random one-dimensional subspace is likely to yield a binary clustering of the points;
	\item the collection of groupings obtained by clustering with respect to different random 1D projections are not consistent with the existence of a unique clustering of the points in the high-dimensional space, namely
	\subitem - their projection directions are different;
	\subitem - the groupings of the points they define are different; 
	\subitem -  the within-group distances, in the original space, are not significantly smaller than the between-group distances. 
\end{enumerate} 
Naturally, this probability density function would have to be such that datasets drawn from it would be expected to violate the assumptions of Diaconis and Freedman \cite{DiaconisPersi1984AoGP}.

This paper describes such a probability density function. We construct it in two stages. 
The first stage, described in Section \ref{sec:stretch_box}, yields an idealized probability density function that serves as a noiseless skeleton for the final probability density function. 
This idealized density is a sparse mixture of Dirac deltas in ${\mathbb R}^D$ built from a multivariate Bernoulli model. The locations of the deltas are chosen in such a way that they do not form any clusters. Yet, a projection of the density function on a random one-dimensional subspace is likely to yield a clear binary clustering. This is accomplished by placing the deltas on the vertices of a hypercube (Section \ref{sec:hypercube}) whose edges are then stretched (Section \ref{sec:stretch_box}) following a geometrical progression. 
The final probability density is obtained (Section \ref{sec:general_model}) by adding noise to the vertices and
translating, rotating and slightly skewing the density with an affine transform.

The resulting probability density  model should be viewed as a proof of concept that illustrates the kind of structures found in real datasets. More sophisticated probability models can be build from it (e.g., hierarchical mixture models). Still, it is possible to construct interesting datasets from it. In Section \ref{sec:experiments}, We illustrate this by showing how to build a dataset of image samples starting from a small image dictionary and a background image. First we show how to use the image dictionary to build a (affine transformed) stretched hypercube probability model. This first model can be used to form random images without noise. We show how adding noise to the model modifies the synthetic images while preserving the grouping structure.
Experiments with a real image dataset show how different clusterings observed after 1D projection of the data provide an overlapping grouping structure for the dataset. We also observe that such grouping structures may be organized in a hierarchical fashion.

\section{Model Construction Overview}
We are looking for a probability density function on ${\mathbb R}^D$ from which we can draw random samples with clustering properties (1) and (2)  mentioned in the introduction. To be precise, here we define a clustering as a partition of a set of points such that the within-class scatter (i.e., the sum of the variances of each class weighted by their relative number of points) is less than the between-class scatter (i.e., the sum of the squared distances between the mean of each class and the dataset mean, weighted by their relative number of points.) A binary clustering of a set of points is considered to be valid (i.e., correct) if 
it is the result of an existing structure in the probability model, rather than a random draw of samples. In other words, a projection of the density itself exhibits the same bimodality.

We begin in Section \ref{sec:two_gaussian}  by constructing a model which will generate samples sets that will typically satisfies Property (1), but not Property (2). The model is a mixture of Gaussian distributions whose means are far away from each other.

Our next step is in 
Section \ref{sec:hypercube}, where we describe a model whose sample sets will typically satisfy Property (2) but not Property (1). 
The model is a a multivariate Bernoulli distribution, whose density is a
mixture of Dirac deltas 
placed on the vertices of a hypercube in ${\mathbb R}^D$. The carefully chosen sparsity of the delta locations creates a model with no natural clustering in the original space. Yet, several directions of  projection (spanned by the edges of the hypercube) would yield a mixture of Dirac deltas in 1D with a binary clustering structure.

The third step in our construction is in Section \ref{sec:stretch_box},  where we use the knowledge gained from our first model in Section \ref{sec:two_gaussian} to rescale the edges of the hypercube model of the second step so to satisfy (1). The key to the construction is a geometric progression in the lengths of the different edges of the hypercube. This yields a skeleton to which noise is added, and which is translated, rotated, and skewed slightly to yield our final proof of concept model in Section \ref{sec:general_model}.

\section{A highly likely clusterable Model}
\label{sec:two_gaussian}
It is not too difficult to construct a set of points whose projection onto a random line through the origin is likely to be bimodal when the line direction is chosen following a uniform distribution on the sphere. A simple way to do this is to draw the points from a mixture of two Gaussians whose means are far away enough with respect to the space dimension.

Consider a scale mixture of two identical, spherical Gaussians with equal priors, separated by some distance $a$. A random variable $X$ drawn from this mixture 
can be written as $X=\mathcal{N}_D+a\mathbf{e} Y$, where $\mathcal{N}_D$ is a standard normal random variable in ${\mathbb R}^D$, $Y$ is a Bernoulli random variable with $p=\frac{1}{2}$, and $\mathbf{e}\in{\mathbb R^D}$ is a unit vector. The random variable $Y$ can be thought of as the signal, and the random variable $\mathcal{N}_D$ can be thought of as symmetric, Gaussian-distributed noise. As we show below, if $a$ is large enough, we can use random projection to uncover the presence of the signal $Y$ within the noise $\mathcal{N}_D$.

In order to uncover the structure of $X$ by projecting onto a vector $\mathbf{v}$,
some component of $\mathbf{v}$ must be in the direction of $\mathbf{e}$. Suppose $\mathbf{v}$ is sampled uniformly from the zero-centered unit sphere sitting in $\R^D$. As the following lemma shows, in high dimensions $D$, the distribution of $\mathbf{v}\cdot\mathbf{e}$ becomes very sharply peaked at 0.

\begin{lemma}
\label{lemma:distribution_dot_product}
Suppose $\{D_k\}$ is a sequence with $D_k\to\infty$ and suppose $\{a_k\}_{k=1}^{\infty}$ is a sequence such that $a_k/\sqrt{D_k}\to a_0$. As $k$ approaches infinity, the random variable $a_k\mathbf{v}\cdot\mathbf{e}$ converges in distribution to a normally distributed random variable,
\[
\lim_{k\rightarrow \infty }a_k\mathbf{v}\cdot\mathbf{e} = a_0 \mathcal{N}_1.
\]
\end{lemma}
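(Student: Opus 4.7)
The plan is to use the standard representation of the uniform distribution on the sphere via a Gaussian vector, reducing the problem to a clean application of the strong law of large numbers combined with Slutsky's theorem.

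First, I would note that by the rotational invariance of the uniform measure on $S^{D_k-1}$, the distribution of $\mathbf{v}\cdot\mathbf{e}$ depends only on the fact that $\mathbf{e}$ is a unit vector, so we may assume without loss of generality that $\mathbf{e}$ is the first standard basis vector. Thus $\mathbf{v}\cdot\mathbf{e}$ is distributed as the first coordinate of a uniform point on $S^{D_k-1}$. Next, I would invoke the well-known fact that if $Z^{(k)}=(Z_1,\ldots,Z_{D_k})$ is a vector of i.i.d.\ standard normals, then $Z^{(k)}/\|Z^{(k)}\|$ is uniformly distributed on $S^{D_k-1}$. This lets me rewrite
\[
a_k\,\mathbf{v}\cdot\mathbf{e} \;\stackrel{d}{=}\; \frac{a_k}{\sqrt{D_k}}\cdot Z_1 \cdot \frac{\sqrt{D_k}}{\|Z^{(k)}\|}.
\]

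Now each factor is easy to analyze. The first factor $a_k/\sqrt{D_k}$ converges to $a_0$ by hypothesis. The second factor $Z_1$ is exactly a standard normal $\mathcal{N}_1$ and does not depend on $k$. The third factor I would handle by the strong law of large numbers applied to $\|Z^{(k)}\|^2/D_k = \frac{1}{D_k}\sum_{i=1}^{D_k} Z_i^2$, whose expectation is $1$; this gives $\|Z^{(k)}\|/\sqrt{D_k}\to 1$ almost surely, hence $\sqrt{D_k}/\|Z^{(k)}\|\to 1$ in probability.

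Finally I would conclude by Slutsky's theorem: multiplying a sequence converging to the deterministic constant $a_0$, a fixed standard normal, and a sequence converging to $1$ in probability, gives convergence in distribution to $a_0\,\mathcal{N}_1$. I do not anticipate a real obstacle; the only minor subtlety is the careful bookkeeping that $Z_1$ and $\|Z^{(k)}\|$ are not independent, which is why I keep $Z_1$ as the ``distributional'' factor and let the law of large numbers handle $\|Z^{(k)}\|$ as a convergent-in-probability factor, so that Slutsky applies cleanly.
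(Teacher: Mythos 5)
Your proposal is correct and follows essentially the same route as the paper's own proof: represent $\mathbf{v}$ as a normalized Gaussian vector, factor $a_k\mathbf{v}\cdot\mathbf{e}$ into $(a_k/\sqrt{D_k})\cdot Z_1\cdot(\sqrt{D_k}/\|Z^{(k)}\|)$, apply the law of large numbers to the norm factor, and conclude with Slutsky's theorem. Your bookkeeping is in fact slightly cleaner than the paper's, which misassigns an extra factor of $a_k/\sqrt{D_k}$ to its $Y_k$ term.
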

\begin{proof}
Without loss of generality, assume $\mathbf{e}=(1,0,\dots,0)$. One way of selecting a random direction $\mathbf{v}$ is to take $\mathbf{v}=(N_1,\dots,N_{D_k})/|(N_1,\dots,N_{D_k})|$ with $\{N_i\}$ independent normal random variables. Then:
\[a_k\mathbf{v}\cdot\mathbf{e}= \frac{a_k}{\sqrt{D_k}} N_1\sqrt{\frac{D_k}{N_1^2+\dots+N_{D_k}^2}}.\]
By the law of large numbers $\frac{D_k}{N_1^2+\dots+N_{D_k}^2}$ converges in distribution to $1$. 

Now recall Slutsky's theorem, which states in particular that for two sequences of random variables $\left\{X_n\right\}$ and $\left\{Y_n\right\}$, if $X_n\to X$ and $Y_n\to c$ in distribution for some distribution $X$ and some constant $c$, then $X_nY_n\to Xc$  in distribution. Apply this theorem with $X_k= N_1$ and $Y_k=\frac{a_k}{\sqrt{D_k}}\sqrt{\frac{D_k}{N_1^2+\dots+N_{D_k}^2}}$, and the proof is complete.

\end{proof}

We can interpret this intuitively as saying that for a fixed dimension $D$, the random variable $\mathbf{v}\cdot\mathbf{e}$ is approximately distributed as $\frac{1}{\sqrt{D}}\mathcal{N}_1$, that is, a normal random variable with mean 0 and variance $1/D$. This quantifies the sense in which random vectors in high dimensions are ``mostly nearly orthogonal.''

In order to quantify whether random projection is useful, we take a Bayesian perspective. After projection, one can use a thresholding to classify a sample point $x$ as corresponding to either $Y=1$ or $Y=0$. The probability of error of that classification depends on the threshold value. We use the minimum probability of error $E$ over all possible threshold values in order to quantify to what extent this projection is divided into two clusters.  

Since $\mathcal{N}_d$ is spherically symmetric, the distribution of $\mathbf{v}\cdot\mathcal{N}_d$ is simply $\mathcal{N}_1$, a standard normal distribution. Then $\mathbf{v}\cdot X$ is distributed as $\mathcal{N}_1+Ya(\mathbf{e}\cdot \mathbf{v})$. This gives us the following simple expression for $E$.

\begin{lemma}
For any projection vector $\mathbf{v}$, the minimum probability of error $E$ is given by $\Phi(-\frac{1}{2}a|\mathbf{e}\cdot\mathbf{v}|)$, where $\Phi$ is the cumulative distribution function of a unit normal random variable.
\end{lemma}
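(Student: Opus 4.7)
The plan is to reduce this to the textbook problem of minimum-error Bayes classification between two equal-variance, equal-prior Gaussians. Let $c = a(\mathbf{e}\cdot\mathbf{v})$. By the discussion preceding the statement, the projection $\mathbf{v}\cdot X$ is distributed as $\mathcal{N}_1$ when $Y=0$ and as $\mathcal{N}_1 + c$ when $Y=1$, with prior $\tfrac12$ on each. Given a threshold $t$, a natural classifier assigns a sample to $Y=1$ if the projection exceeds $t$ and to $Y=0$ otherwise; the probability of error is then a simple sum of two Gaussian tails, and I want to minimize it in $t$.

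First I would handle the sign by observing that the setup is symmetric under $c \mapsto -c$ (swap the decision rule), so without loss of generality I can assume $c \ge 0$, and the answer will end up depending only on $|c|$. Next, I would compute the error for a generic threshold $t$:
\[
P_{\text{err}}(t) = \tfrac{1}{2}\,P(\mathcal{N}_1 > t) + \tfrac{1}{2}\,P(\mathcal{N}_1 + c \le t) = \tfrac{1}{2}\bigl(1-\Phi(t)\bigr) + \tfrac{1}{2}\Phi(t-c).
\]
Differentiating with respect to $t$ and using $\Phi' = \varphi$, the standard normal density, gives the stationarity condition $\varphi(t) = \varphi(t-c)$; since $\varphi$ is symmetric and unimodal, the unique solution is $t = c/2$, and the second-derivative check (or direct convexity of the exponent) confirms this is the minimum.

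Substituting $t = c/2$ collapses both terms to $\tfrac{1}{2}\Phi(-c/2)$, yielding $E = \Phi(-c/2) = \Phi(-|c|/2) = \Phi\!\bigl(-\tfrac{1}{2}a|\mathbf{e}\cdot\mathbf{v}|\bigr)$, as claimed. The only slightly subtle point is justifying that one need only consider threshold classifiers at all — i.e., that an arbitrary measurable decision rule cannot do better. This is the Neyman–Pearson/Bayes optimality statement: since the likelihood ratio between the two conditional densities is monotone in the projected coordinate (both Gaussians have the same variance), every Bayes-optimal rule is a threshold rule, so the infimum over measurable classifiers equals the infimum over thresholds. I expect this is the only step the authors might either cite or dispatch in one sentence; everything else is a direct calculation.
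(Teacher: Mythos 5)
Your proposal is correct and follows the same route as the paper: identify the two projected conditional distributions as unit normals centered at $0$ and $a\mathbf{e}\cdot\mathbf{v}$, place the threshold at their midpoint, and read off the error. You simply supply the details the paper asserts without proof (the derivation that the midpoint minimizes the threshold error, and the monotone-likelihood-ratio remark justifying that threshold rules are optimal), which the paper dispatches in one sentence.
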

\begin{proof}[Outline of proof]
Regardless of $\mathbf{v}$, one of the clusters after projection will be a unit normal distribution centered at the origin. The other cluster will also be a unit normal distribution, but its center depends on $a$ and $\mathbf{v}$ as $a\mathbf{e}\cdot\mathbf{v}$. The ideal threshold will be at their midpoint $\frac{1}{2}a\mathbf{e}\cdot\mathbf{v}$, and this produces the stated error.
\end{proof}

By \cref{lemma:distribution_dot_product}, 
$\mathbf{e}\cdot\mathbf{v}$ is distributed approximately as $\frac{1}{\sqrt{D}}\mathcal{N}_1$, so $E$ is approximately distributed as $\Phi(-\frac{a}{2\sqrt{D}}|\mathcal{N}_1|)$. This distribution is skewed toward values close to its maximum of $\frac{1}{2}$ when $\frac{a}{2\sqrt{D}}<1$, and skewed toward values close to its minimum of $0$ when $\frac{a}{2\sqrt{D}}>1$.

Notice $a^2/4$ is precisely the between-class scatter defined in the introduction. The within-class scatter is precisely $D$. Hence, the distribution of $E$ is skewed toward zero precisely when the the between-class scatter is more than the within-class scatter. That is, we find the projection reliably by random projection when the two Gaussians form clusters in $\mathbb{R}^D$ in the traditional sense.

\section{A model with no clusters but clusterable in many different ways}
\label{sec:hypercube}

In the model considered in \cref{sec:two_gaussian}, the noise was many-dimensional and the underlying structure to be discovered was one-dimensional. Here we present a model in $\R^D$ that has meaningful structure in each of its dimensions. As a result, the structure can be divided into two groups in at least $D$ different ways by projection. However, the structure itself contains no cluster in $\R^D$.

Let $Y=(Y_1,Y_2,\dots,Y_D)$ be composed of Bernoulli random variables $Y_i$ (i.e., a multivariate Bernoulli distribution). For now, we assume that they all have the same parameter $p=\frac{1}{2}$ and that they are independent, but these assumptions will be relaxed later in the full model. The multivariate random variable $Y$ takes as its values the vertices of a unit hypercube in $\mathbb{R}^D$. Thus, its probability density function can be viewed as a mixture of Dirac deltas, where the deltas are situated on the vertices of a unit hypercube. 
Since a hypercube in a $D$-dimensional space has $2^D$ vertices, a very large number of non-repeated sample points can potentially be drawn following such a model. For example, in dimension $D=20$, 
over a million points could be put on the different vertices of the hypercube without any overlap.

Because of our choice of $p=\frac{1}{2}$ for all the Bernoulli random variables $Y_i$, the Dirac deltas are equally weighted in the model. In other words, the random variable $Y$ has equal probability of lying on each of the vertices of the hypercube. This encapsulates the idea that there are many (independent) features in the model, which we hope to recover by projecting onto a subspace. Specifically, if we project onto one of the axes, say axis $i$, then we would get a perfectly split bimodal distribution,
 that is to say a perfect classification for the two classes $Y_i=0$ and $Y_i=1$. For any of these projections, the value of the minimum classification error by thresholding is $E=0$.

However, splitting the model after projection onto an axis would separate some points which are only one unit apart, while placing within a group points which are much farther apart, up to a maximum distance of $\sqrt{D}$. Thus the clusters obtained by projection do not correspond to a clustering in $\R^D$. In fact, any separation of the points into groups would have to intersect with at least one edge between two adjacent vertices, and thus by the same argument the dataset itself does not contain any cluster in $\R^D$.

This situation does not present a high likelihood of finding structure by random projection. Indeed, the model fits the assumption of Theorems 1.1 and 1.2 in Diaconis and Freedman \cite{DiaconisPersi1984AoGP}, which states that one-dimensional projections of points sampled from this distribution will generally be indistinguishable from points sampled from a Gaussian distribution.

\section{A model with no cluster but likely clusterable}
\label{sec:stretch_box}

\subsection{Stretched hypercube model construction}
Now we modify the model of \cref{sec:hypercube} in such a way to increase the probability of finding a good linear separation by random projection. As stated earlier, we are motivated by empirical evidence \cite{han2015hidden} that linear separations are quite common; in order to ensure that they are likely in this model, we must violate one or both of the assumptions in Diaconis and Freedman \cite{DiaconisPersi1984AoGP}. 
This is done by carefully stretching the previous model. 
Specifically, let $X=(a_1Y_1,a_2Y_2,\dots,a_DY_D)$.

To simplify the  discussion in this section, we identify our model with a dataset containing one point on each of the vertices of a stretched hypercube in ${\mathbb R}^D$, with side lengths $a_1,a_2,\ldots, a_D$. These points represent the location of the Dirac deltas in the mixture, in other words, the possible locations of sample points drawn from the mixture. Because of our choice of Bernoulli parameter ($p=\frac{1}{2}$), a sample point is equally likely to lie in any of the vertices of the hypercube, thus we put exactly one point per vertex.

As a way of fixing scale, let $a_1=1$. To motivate our choice of the other $a_i$'s, recall that we consider a separation of the points to be a cluster if the within-class scatter is smaller than the between-class scatter. In our specific case, we would consider the split along axis $k$ to be a binary clustering of the modified hypercube if $a_k^2 > \sum_{i\neq k} a_i^2$. One way of ensuring this never occurs is to simply take $a_k^2 = \sum_{i<k} a_i^2$. This gives us a recursive definition for each $a_k$, which resolves to the solution $a_k^2 = 2^{k-2}$ for $2\leq k\leq D$. Thus, a geometric progression of $a_k$ presents itself naturally. We consider below a generalization of this, where $a_k^2 = r^{k-2}$ for some $1\leq r\leq 2$. When $r=1$ the model is precisely the hypercube considered in \cref{sec:hypercube}. When $r>2$, there is a cluster formed by separating the data along dimension $D$, hence we exclude this case.

For example, let us look at the three-dimensional case $D=3$. We then have $a_1=a_2=1$ and $a_3=\sqrt{r}$, and so the dataset contains the eight points:
\begin{align*}
p_1&=(0,0,0) &   p_5&=(1,0,0)\\
p_2&=(0,0,\sqrt{r}) & p_6&=(1,0,\sqrt{r})\\
p_3&=(0,1,0) & p_7&=(1,1,0)\\
p_4&=(0,1,\sqrt{r}) & p_8&=(1,1,\sqrt{r}).
\end{align*}
Projection can yield one of three different clusterings: either according to the first dimension $\{ p_1,p_2,p_3,p_4\}$, $ \{p_5,p_6,p_7,p_8\}$ or according to the second dimension $\{ p_1,p_2,p_5,p_6\}$, $ \{p_3,p_4,p_7,p_8\}$ or according to the third dimension $\{p_1,p_3,p_5,p_7\}$, $ \{p_2,p_4,p_6,p_8\}$. Of these, the last clustering is the most pronounced, since the distance between the clusters is $\sqrt{r}$. However, the within-class scatter is $\frac{1}{4}(a_1^2+a_2^2)=\frac{1}{2}$ while the between-class scatter is $\frac{1}{4}a_3^2=r/4$. Since we imposed $r\leq 2$ we know the within-class scatter is larger, and so the clusters are not separated enough to meet our clustering criterion.

The above argument shows that this model does not have any clusters. However, as we will now demonstrate, the probability of finding a highly separable projection at random is not small. Although many types of clusterings may be possible after projection, we will restrict our attention to clusterings which correspond to one of the $Y_k$, that is, where in one class we have $Y_k=0$ and in the other class we have $Y_k=1$. 

Suppose we project via a vector $\mathbf{v}=(N_1,\dots,N_D)$ where each $N_i$ is a standard Gaussian. We will consider the probability that this produces a good separation corresponding to $Y_k$, for some $k\in \{1,2,\ldots,D\}$. Note that after projection, each data point is of the form $\sum_{i=1}^D a_i Y_i N_i$. Treating $a_i$ and $N_i$ as fixed constants, the within-class scatter with respect to $Y_k$ will be 
\[
\Var\left(\sum_{i\neq k} a_i Y_i N_i\right)=\sum_{i\neq k} (a_i N_i)^2 \Var(Y_i)=\frac{1}{4}\sum_{i\neq k} (a_i N_i)^2.
\] 
The between-class scatter is $\frac{1}{4} (a_k N_k)^2$. Hence, we will have a clustering after projection corresponding to $Y_k$ when 
\[ \sum_{i\neq k} (a_i N_i)^2 < (a_k N_k)^2.\]

We simulated 1,000,000 random vectors $\mathbf{v}$ and checked for which fraction of these vectors there was a $k$ such that $\sum_{i\neq k} (a_i N_i)^2 < (a_k N_k)^2$. We performed this experiment for various values of $r$ and $D$. The results are shown in \cref{fig:projected_separations_plot}. As the number of dimensions increases, the curves appear to approach a limiting curve. As we would predict, the probability of a good separation at $r=1$ approaches zero as $D$ increases. However, for even a modest increase in $r$ (for example to $r=1.2$), the probability of finding a good separation remains at a reasonable level of $10\%$.

\begin{figure}
	\includegraphics[width=0.6\textwidth]{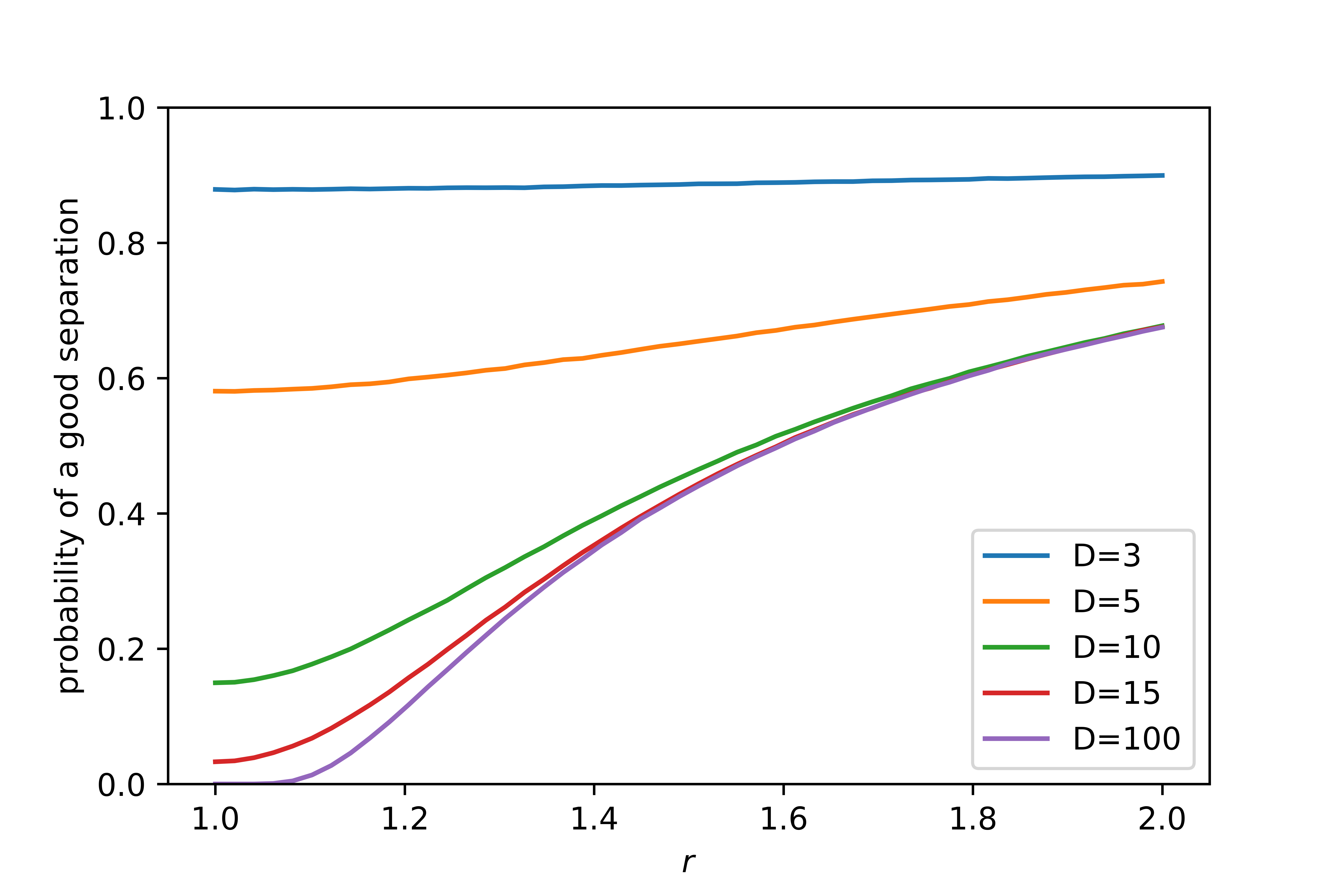}
	\caption{The probability of finding a good separation by random projection, for various values of $r$ and $D$. These curves were empirically computed by simulation, with 1,000,000 trials for each point.}
	\label{fig:projected_separations_plot}
\end{figure}

\subsection{Bounds on the probability of a separation along the largest separation direction}
Above we demonstrated empirically that good separations are fairly likely even when $r$ is not large. In this section, we will provide a bound which explains this behavior in part, based on considering only one of the possible separation directions. Specifically, we will bound the probability that all the points where $Y_d=0$ are separated from the points where $Y_d=1$.

Based on the above discussion, we are considering the probability that 
\[\sum_{i<D} (a_i v_i)^2<(a_dv_d)^2,
\] where $(v_1,\dots, v_D)$ is a random projection vector. We bound this probability by making the simple observation that $a_i$ is an increasing sequence; this is a very loose bound, but it gets a few important points across so we explore it nonetheless. Observe that 
\begin{align*}
P\qty{\sum_{i<D} (a_i v_i)^2<(a_dv_d)^2}&\geq P\qty{\sum_{i<D} (a_{D-1}v_i)^2<(a_dv_d)^2}\\
&= P\qty{\sum_{i<D}v_i^2< r v_d^2}.
\end{align*}

\begin{figure}
    \centering
    \includegraphics[width=0.6\textwidth]{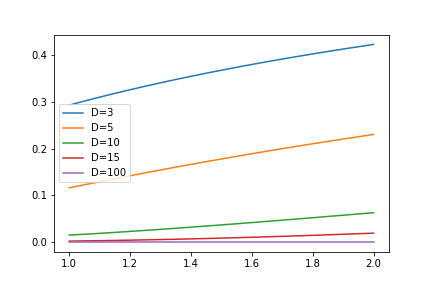}
    \caption{The computed bound on the probability of a separation along the longest axis.}
    \label{fig:chi_square_bound}
\end{figure}

Taking each $v_i$ sampled form a standard normal distribution, this has a familiar form. On the left, $\sum_{i<D}v_i^2$ has a chi-square distribution with $D-1$ degrees of freedom, and the right $rv_d^2$ follows a scaled chi-square distribution with one degree of freedom. Using the formulas for these distributions we can compute this lower bound in terms of an integral; the result is shown in Figure \ref{fig:chi_square_bound}. Notice that this is a poor bound for high $D$; this is due to two factors. First, we are neglecting the other separation directions; each of them contributes some quantity to the probability that one of the directions is well-separated after projection. Second, we are ignoring the effect of each $a_i<a_{D-1}$ which serve to diminish the right-hand side. We bring it up, however, to highlight two facts: (1) that the probability of a separation does increase monotonically toward 1 as $r$ increases and (2) this effect is not explained entirely as scale mixture of two far away clusters as described in Section \ref{sec:two_gaussian}. By contrasting Figures \ref{fig:projected_separations_plot} and \ref{fig:chi_square_bound} we can see the significant contribution of the other projection directions.

\subsection{Note about Data Rescaling}
\label{sec:discussion}
The previous hypercube and the stretched hypercube are very similar, as one is a linear transformation of the other. So any separating projection of the box model could also be found in the hypercube. The only difference is how likely one might find a separating direction of projections when drawing at random. 

A common pre-processing step is to whiten data before examining it. If we were to whiten the stretched hypercube, however, we would get something like the hypercube model. This would make the separations of the many different signals present in the data harder to find. More specifically, whitening  has the effect of amplifying the noise, while diminishing the signal. As a result, there is less room for inaccuracy when choosing a dimension of projection for clustering. Thus this decreases the probability of hitting a separating direction. While one can argue that, in many cases, whitening the data makes sense,
there are many other instances where it does not. Our numerical experiments in Section \ref{sec:experiments} feature such an example.

\section{General Model}
\label{sec:general_model}
We have shown that a mixture of equally weighted Dirac deltas situated on the vertices of a stretched hypercube in ${\mathbb R}^D$ with side length $a_i$, $i=1,\ldots, D$  is such that
\begin{itemize}
\item there exist $D$ different 1D projection directions which result in a bimodal distribution (two deltas) in 1D;
\item the groupings obtained by different projections are different.
\end{itemize}
Mathematically, a random variable from this stretched hypercube model can be written as
\[ X = \left( a_1 B_1 ,\ldots, a_D B_D \right), \quad \text{ for some } a_1,\ldots, a_D \in {\mathbb R}_{>0} \]
where the $B_i$ are Bernoulli random variables with parameter $p_i$. 
If we allow the Dirac deltas to have non-equal weights, then the Bernoulli parameters $p_i$ can take on any values in $[0,1]$ and the Bernoulli random variables $B_i$ do not have to be independent.
If we assume that the side-lengths $a_i=\sqrt{r^{i-1}}$, for some $r$ with $1<r$, then 
\begin{itemize}
\item picking a random projection direction is likely to yield a 1D projected distribution that is bimodal.
\end{itemize}
Furthermore, if $r<2$, then 
\begin{itemize}
\item the binary grouping defined by a bimodal structure in 1D does not correspond to a clustering in the original space.
\end{itemize}
 The stretched hypercube model can be rotated, translated and flipped as  
\[ X = t+ o \left( a_1 B_1 ,\ldots, a_D B_D \right), \quad \text{ for some } t \in {\mathbb R}^D, o\in O(D)  \]
without losing its properties.
An affine transform can also be applied to map the stretched hypercube to a parallelotope in ${\mathbb R}^D$, though this would change the probability of uncovering a binary cluster after a random 1D projection :
\[ X = t+ g \left( a_1 B_1 ,\ldots, a_D B_D \right), \quad \text{ for some } t \in {\mathbb R}^D, g\in GL(D).  \]
Adding noise to the parallelotope leads to a more realistic model. For example, one can replace the Dirac deltas by Gaussians by setting
\begin{equation*}
 X = t+ g \left( a_1 B_1,\ldots, a_D B_D\right) +N  
\end{equation*}
with  $N$ a zero mean normal variable in ${\mathbb R}^D$. 
One can also assume that some dimensions consisting entirely of Gaussian noise with a mean value of $a_i$ by setting the parameter for the corresponding Bernoulli random variable  to either $p_i=1$ or $p_i=0$.
One could argue that these Gaussians form the ``true" groupings of any dataset drawn from such a distribution. But a better description is to say these are subgroups of the different binary groupings found after projection. In other words, they form a fine-scale grouping structure underneath the larger scale (overlapping) grouping structure defined by the 1D projections.

\section{Numerical Experiments}
\label{sec:experiments}
The experiments in this section were performed using scikit-learn \cite{scikit_learn}. 
\subsection{Synthetic image datasets}
The stretched hypercube constructed in Section \ref{sec:hypercube} is a simple, idealized probability model for data without noise. As stated above, the properties of the model are unchanged if it is translated, rotated and flipped. An affine mapping can also be applied to transform it into a parallelotope in ${\mathbb R}^D$, though this does change the probability of obtaining a clustering after projection. Below we show how such a parallelotope distribution can be used to model a sample set of images. This is done by adding a linear combination of dictionary images to a background image. The dictionary images are vectors which span the edges of the parallelotope.  
A change of basis that  maps the dictionary images to the standard axes $(1,0,\ldots ,0)$, $(0,1,0,\ldots ,0 )$ etc., combined with a translation mapping the background image to the origin, would transform the parallelotope back into our initial stretched hypercube model.

More specifically, we construct $24\times 24$ pixel greyscale images by generating random samples in ${\mathbb R}^{24^2}$. 
Let $t\in {\mathbb R}^{24^2}$ be a background image.
Let $k\leq 24^2$ and let $B_1,\ldots, B_k \in {\mathbb R}^{24^2}$ be a set of dictionary images containing different objects. Then an image is modeled as a random vector
\begin{align} \label{vertices}
I= t+\sum_{i=1}^k a_i B_i D_i,
\end{align}
where the $B_i$'s are independent Bernoulli random variables with parameters $p_i$ and the $a_i$'s are the (fixed) scaling factors. The $a_i$ represent the relative darkness of the different dictionary objects on the picture:  the larger the $a_i$, the more dark (prominent) the corresponding shape within the image.
From a perception perspective, 
 the darkness of an object can be related to its importance. For example, very light objects may be considered noise. It that case, it would be unwise to whiten the data, as this would put the noise on equal footing with the signal.

Four images generated by this model are shown in Figure \ref{fig:vertices}. We picked a light gray background image $t$ and the $k=10$ dictionary images shown in Figure \ref{fig:dictionary}. We set all the Bernoulli parameters to $p=\frac{1}{2}$, $a_i=r^{i-1}$ and the stretching parameter to $r=1.5$. Figure \ref{fig:histogram} shows the projection of the images on 10 different lines, namely the lines spanned by the 10 dictionary images $B_i$. Thus clustering number $i$ is based on whether the image does or does not contain the dictionary image $D_i$.  
As one can see, each of the projections form a perfect binary clustering.

Figure \ref{fig:vertices_less_noisy} shows four images generated by adding noise to the previous model. More specifically, the images are generated according to the equation
\begin{align}\label{noisy_vertices}
    I= t+\sum_{i=1}^k a_i B_i D_i + N,
\end{align}
where $\{D_i\}_{i=1}^{k}$ is the dictionary set and $N$ is a random vector with Gaussian distribution: $N \sim \mathcal{N}_{D}\left(0, \sigma^2\mathbbm{1}\right)$. A total of two hundred images were generated using this model. The projection of these images to the same axes as for the previous experiments is shown in Figure \ref{fig:histogram_less_noisy}. As expected, the clusters are now slightly spread out.

\begin{figure}
    \centering
    \includegraphics[width=\textwidth]{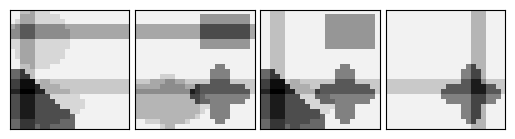}
    \caption{Four of the vertices of a parallelotope according to Formula \ref{vertices}. The background $t$ is a light grey image, the number of elements in the dictionary is $k=10$ and the scaling coefficients (edge lengths) are $a_i=(1.5)^{i-1}$, i=1,\ldots,10.}
    \label{fig:vertices}
\end{figure}

\begin{figure}
    \centering
    \includegraphics[width=\textwidth]{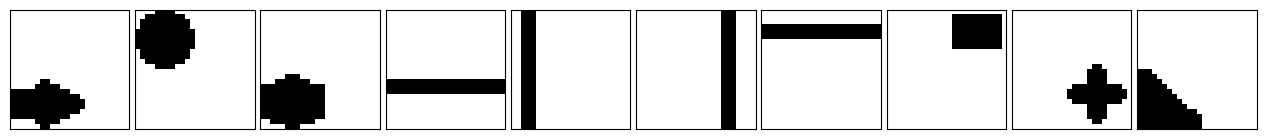}
    \caption{Dictionary of images $\{D_i\}_{i=1}^{10}$ used to generate the vertices of the parallelotope illustrated in Figure \ref{fig:vertices}. }
    \label{fig:dictionary}
\end{figure}

\begin{figure}
    \centering
    \includegraphics[width=\textwidth]{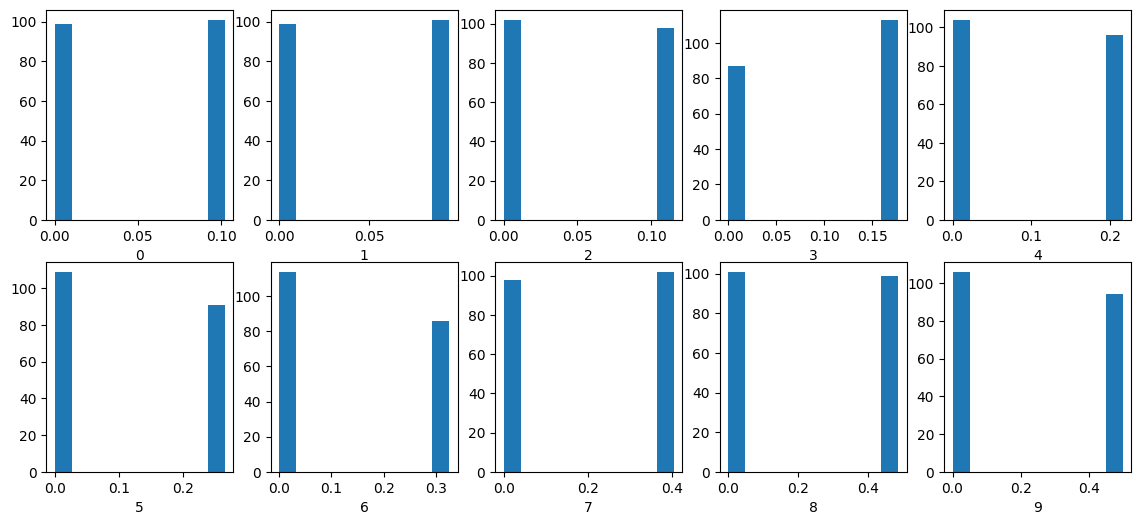}
    \caption{Histogram of image distribution after 10 different projections: $200$ vertices were generated following Formula  \ref{vertices} in a similar manner as the ones in Figure \ref{fig:vertices}. The images were then projected onto the lines spanned by the dictionary images, resulting in 10 perfect (different) clusterings. }
    \label{fig:histogram}
\end{figure}

\begin{figure}
    \centering
    \includegraphics[width=\textwidth]{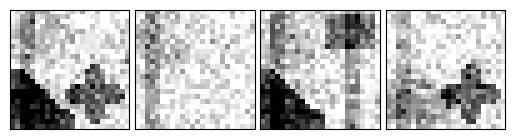}
    \caption{Four noisy vertices of a parallelotope generated according to Formula \ref{vertices}. The background  image $t$ is a uniformly grey image, the number of elements in the dictionary is $k=10$ and scaling coefficients $r_i=(1.5)^{\frac{i-2}{2}}$. The level of noise is set to $\sigma=1$.}
    \label{fig:vertices_less_noisy}
\end{figure}

\begin{figure}
    \centering
    \includegraphics[width=\textwidth]{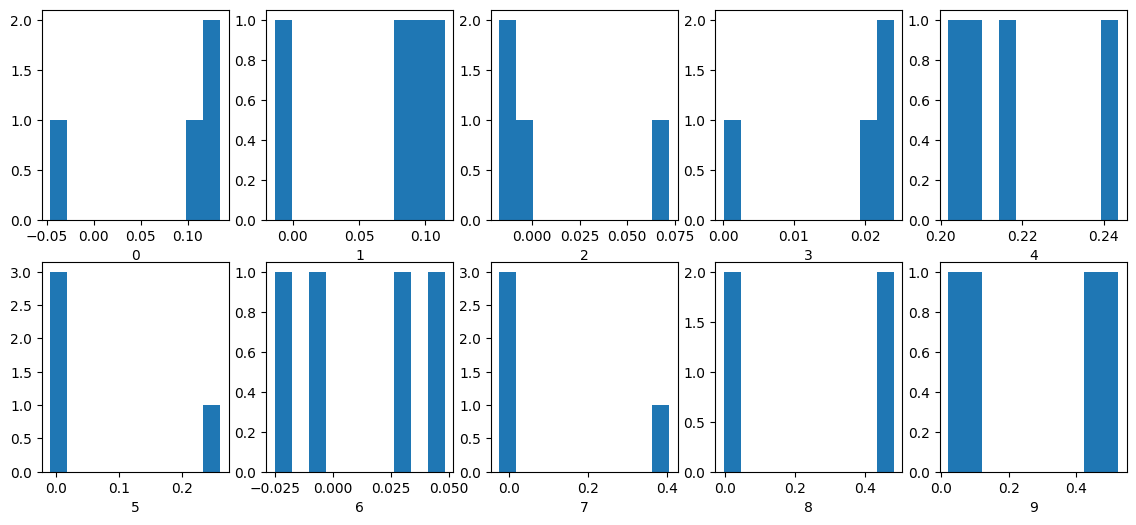}
    \caption{Histogram of image distribution after 10 different projection: $200$ vertices were generated following Formula \ref{noisy_vertices} in a similar manner as the images in Figure \ref{fig:vertices_less_noisy}. The images were then projected onto the lines spanned by the 10 dictionary images, resulting in 10 different clusterings.}
    \label{fig:histogram_less_noisy}
\end{figure}

\color{black}

\subsection{Real image dataset}
Now we consider 1797 digit images available from scikit-learn \cite{scikit_learn} and show that this dataset has a ``noisy" parallelotope structure by showing the existence of projection lines resulting in a binary clustering. 
These are 8x8 greyscale images representing hand-written digits. 
In order to find lines of projection resulting in a binary clustering, we used the  Minimum Description Length criterion in a similar manner as in \cite{Bouman97} to find projection directions resulting in a good binary clustering. The histograms for four of these good projections are shown in Figure \ref{fig:real1}. These four projection directions define the directions of the edges of a parallelotop in ${\mathbb R}^4$; each digit image in the dataset can then be associated to one of the 16 vertices of the parallelotope based on their cluster membership with respect to each of the four projections. The average of the images associated to each vertex is shown in Figure \ref{fig:average_16}. Noe that there are four white average images, which correspond to vertices which are associated to no image in the dataset. Thus the Beroulli parameters controlling the presence of the associated features are not equal to $\frac{1}{2}$, otherwise we would expect each vertex to be associated to roughly the same number of images.

The data in that set appears to have a hierarchical cube structure. For example, the images corresponding to the first vertex in the second row of Figure \ref{fig:average_16}) can be projected to form the bimodal structure shown in Figure \ref{fig:projection_hierarchical}. As one can see from the average images shown in Figure \ref{fig:average_hierarchical}, the two resulting groups of images seem to correspond to images representing shapes that look like a ``4" or a ``6", respectively.
In future work, we plan to develop a systematic way to build a hierarchical hypercube model to encode the structure of real datasets such as this one.

\begin{figure}
  \begin{subfigure}[b]{0.45\textwidth}
    \includegraphics[width=\textwidth]{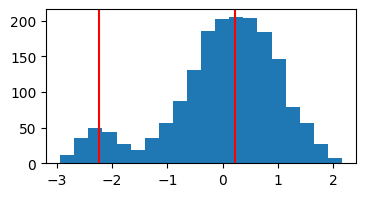}
    \caption{}
  \end{subfigure}
  \begin{subfigure}[b]{0.45\textwidth}
    \includegraphics[width=\textwidth]{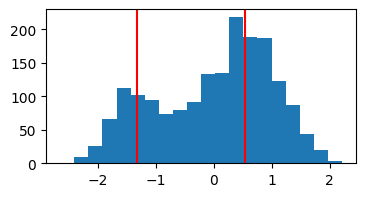}
    \caption{}
  \end{subfigure}
  \begin{subfigure}[b]{0.45\textwidth}
    \includegraphics[width=\textwidth]{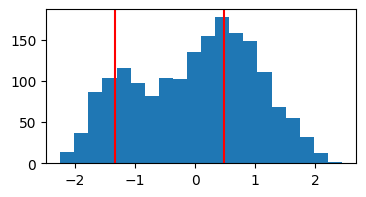}
    \caption{}
  \end{subfigure}
  \begin{subfigure}[b]{0.45\textwidth}
    \includegraphics[width=\textwidth]{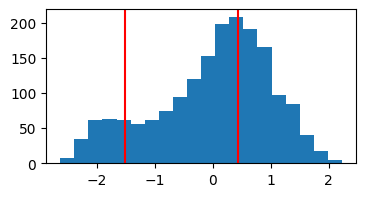}
    \caption{}
  \end{subfigure}
  \caption{Histogram of digit images projected onto 4 different lines. The red vertical line corresponds to mean of the clusters separated based on the Expectation Maximization algorithm.}
  \label{fig:real1}
\end{figure}

\begin{figure}
    \centering
    \includegraphics[width=0.7\textwidth]{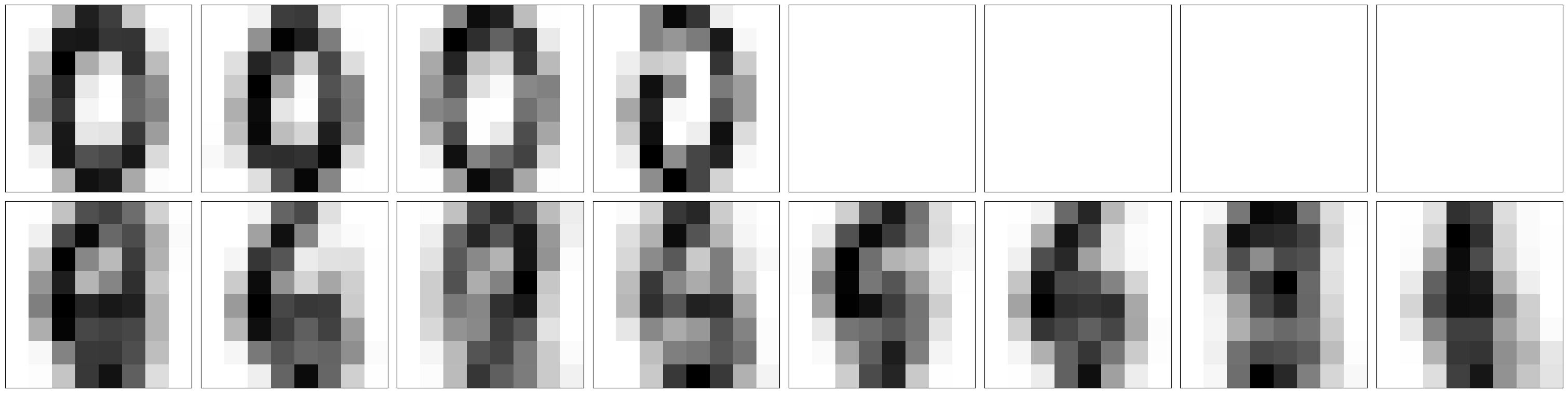}
    \caption{Average of the 16 groups of digit images corresponding to the 16 vertices of the parallelotope defined by the projection directions corresponding to Figure \ref{fig:real1}.}
    \label{fig:average_16}
\end{figure}

\begin{figure}
    \centering    \includegraphics{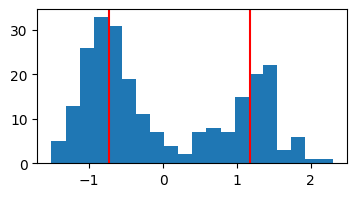}
    \caption{Projection of the group of digit images corresponding to one vertex of the parallelotope defined by the projection directions from Figure \ref{fig:real1}. This binary clustering shows the hierarchical nature of the hypercube model underlying the digit image dataset. }
    \label{fig:projection_hierarchical}
\end{figure}

\begin{figure}
    \centering
    \includegraphics[width=0.7\textwidth]{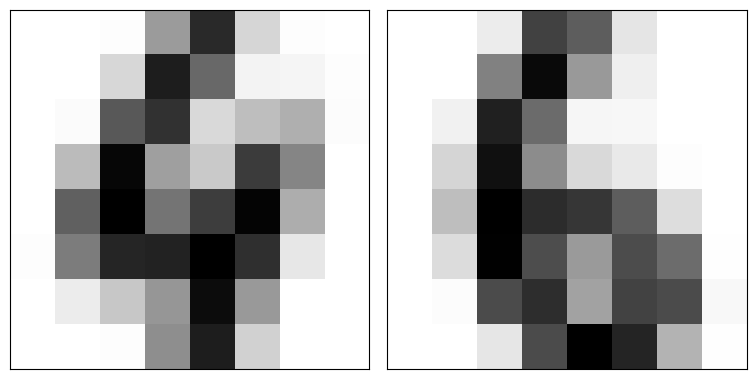}
    \caption{Average of the two groups corresponding to the two clusters from Figure \ref{fig:projection_hierarchical}.}
    \label{fig:average_hierarchical}
\end{figure}

\section{Conclusion and Future Work}
\label{sec:conclusion}

In order to explain previous empirical observations that some real datasets are highly likely clusterable by projection on a random line, we proposed a proof of concept model for the probability density function of high-dimensional data. In order to build the model, we first constructed a skeleton model consisting of a multivariate Bernoulli distribution, in other words placing Dirac deltas on the vertices of a hypercube in ${\mathbb R}^D$.  This first construction illustrates how  seemingly incompatible, yet valid, binary groupings (controlled by different Bernoulli processes) may exist within the same dataset. It also highlights the need to distinguish between the task of grouping high-dimensional data points and that of ``clustering" them, in the sense of finding point accumulations in the original space: points can be grouped meaningfully based on clusters found in a lower-dimensional projection, but the resulting groups are not necessarily clusters (i.e., point accumulations) in the original high-dimensional space.

We then rescaled the multivariate Bernoulli distribution, that is to say we stretched the sides of the hypercube. We showed how certain ways of stretching make the distribution such that projecting it on a randomly chosen  line is likely to yield a bimodal distribution (in 1D). Such easily found clustering directions are associated with the rescaled Bernoulli processes with the largest amplitude (e.g., the darker features in an image). By construction, the prominence of these structures would disappear if the dataset was whitened prior to clustering. 
This construction illustrates a possible reason why certain datasets are easily clusterable after 1D random projection, and the characteristics of the projection directions that are more prominent.

Our construction shows that the multivariate Bernoulli distributions can be rescaled so to be highly likely clusterable after random projection, yet remain so sparse that the Dirac deltas in the original space do not cluster in any way. This further highlights the fact that distributions that are easy to cluster through 1D random projections may not necessarily contain any cluster in the original space, and thus further emphasizes the need to distinguish the task of grouping a set of points versus clustering in the sense of finding point accumulations in space.

After applying a linear transform to the stretched hypercube (transforming it into a parallelotope), one can add noise by replacing the Dirac deltas placed on the vertices by other unimodal distributions (e.g., Gaussians). This adds actual clusters in the original space. 
In the case where the unimodal distributions at the vertices of the parallelotope are Gaussians, our probability model can be viewed as a Gaussian mixture but with a special geometry between the Gaussian positions: a geometry that is controlled by Bernoulli random variables at different scales. If the dataset given contains a relatively small number of points in relation to its dimensionality, the sparsity of the points will make it very difficult to observe these clusters. 
On the other hand, the underlying Bernoulli random processes (i.e., the binary groupings after 1D projection) can be quite easy to identify in 1D, where sparsity is usually no issue.

We are currently investigating extension of this work to the problem of supervised classification \cite{yellamraju2018benchmarks,LarsonBoutin17}, noting the high degree of versatility and good generalization properties of classification by random projection in general \cite{boutin2023optimality}. A model similar to the one proposed here can be associated to a large Rashomon ratio for random projection classification methods, which would guarantee even better generalization properties if the classification problem fits this model \cite{coupkova2024rashomon}.

\section*{Acknowledgements}
We thank Alden Bradford for several significant contributions to this work and Tarun Yellamraju for stimulating discussions.

\bibliographystyle{plain}
\bibliography{bibliography} 

\end{document}